\documentclass{article}

\usepackage{microtype}
\usepackage{graphicx}
\usepackage{subcaption}
\usepackage{booktabs} 

\usepackage{hyperref}

\usepackage[preprint]{icml2026}

\usepackage{amsmath}
\usepackage{amssymb}
\usepackage{mathtools}
\usepackage{amsthm}

\usepackage[capitalize,noabbrev]{cleveref}

\theoremstyle{plain}
\newtheorem{theorem}{Theorem}[section]
\newtheorem{proposition}[theorem]{Proposition}
\newtheorem{lemma}[theorem]{Lemma}
\newtheorem{corollary}[theorem]{Corollary}
\theoremstyle{definition}

\theoremstyle{remark}
\newtheorem{remark}[theorem]{Remark}

\usepackage[textsize=tiny]{todonotes}

\icmltitlerunning{Perplexity Cannot Always Tell Right from Wrong}

\begin{document}

\twocolumn[
  \icmltitle{Perplexity Cannot Always Tell Right from Wrong}



  \icmlsetsymbol{equal}{*}

  \begin{icmlauthorlist}
    \icmlauthor{Petar Veli\v{c}kovi\'{c}}{gdm}
    \icmlauthor{Federico Barbero}{gdm}
    \icmlauthor{Christos Perivolaropoulos}{gdm}
    \icmlauthor{Simon Osindero}{gdm}
    \icmlauthor{Razvan Pascanu}{gdm}
  \end{icmlauthorlist}

  \icmlaffiliation{gdm}{Google DeepMind}

  \icmlcorrespondingauthor{Petar Veli\v{c}kovi\'{c}}{petarv@google.com}

  \icmlkeywords{Machine Learning, ICML}

  \vskip 0.3in
]



\printAffiliationsAndNotice{}  

\begin{abstract}
   Perplexity---a function measuring a model's overall level of ``surprise'' when encountering a particular output---has gained significant traction in recent years, both as a loss function and as a simple-to-compute metric of model quality. Prior studies have pointed out several limitations of perplexity, often from an empirical manner. Here we leverage recent results on Transformer continuity to show in a rigorous manner how perplexity may be an unsuitable metric for model selection. Specifically, we prove that, if there is \emph{any} sequence that a compact decoder-only Transformer model predicts accurately and confidently---a necessary pre-requisite for strong generalisation---it must imply existence of another sequence with very low perplexity, but not predicted correctly by that same model. Further, by analytically studying iso-perplexity plots, we find that perplexity will not always select for the more accurate model---rather, any increase in model confidence must be accompanied by a commensurate rise in accuracy for the new model to be selected.
\end{abstract}

\section{Introduction}

Perplexity is a measure of a model's ``surprise'' when observing ground-truth data; assuming a model's output distribution, $Q$, over a space of classes, $\mathcal{C}$, used to approximate a ground-truth distribution, $P$, it can be expressed as
$
    \exp\sum_{k\in\mathcal{C}}- p_k\log q_k,$
and, since it is easy to compute over any classification task (inlcuding tokenised data), it has become a popular function for evaluating sequential machine learning models when no other performance metric is readily computable. However, even though it is simple to use and interpret, in this paper, we provide novel evidence that perplexity should not be blindly trusted as a model selection objective. 

This is a result that has been informally observed in several venues (\citet{fang2025what, hu2024can,hsieh2024ruler}). Intuitively, it stems from the fact that perplexity encodes both confidence in prediction as well as correctness. Subsequently, a model with a lower accuracy (e.g. more answer tokens predicted incorrectly in the context of language models), but with better-calibrated confidence, may have lower perplexity---and thus be preferred.

\begin{figure}[t]\includegraphics[width=0.979\linewidth]{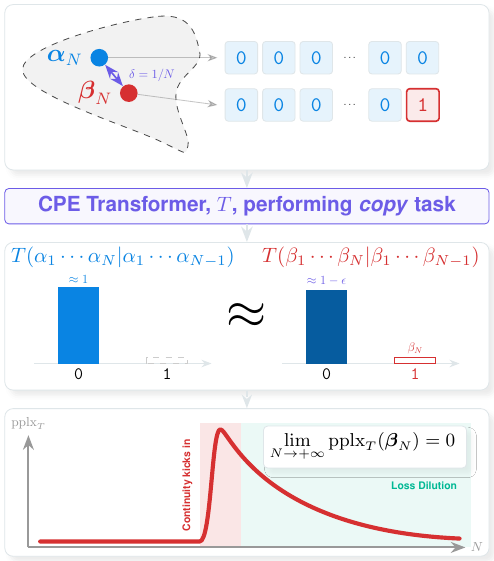}
    \caption{Using the continuity result of \citet{pasten2025continuity}, we show that, if a (compact) Transformer $T$ is confident in copying \emph{any} long enough sequence $\boldsymbol\alpha_N$, then there must exist $\boldsymbol\beta_N$ which $T$ fails to copy, yet, log-perplexity will tend to zero as $N$ grows.}
    \label{fig:placeholder}
\end{figure}

Making use of recent important theoretical developments \citep{pasten2025continuity}, in this work 
we aim to provide a rigorous treatment of some of these observations, along with a detailed analytic account of how perplexity drives unfavourable ``offsets'' between confident and performant models in perplexity's decision space. In doing so, we will prove that any confident model will necessarily introduce inputs which it will likely get wrong, but at a negligible effect to the perplexity measure (see Figure \ref{fig:placeholder}).

Specifically, our work makes the following contributions, all pertaining to the damaging effect of \emph{miscalibrated confidence} on perplexity's power as a model selector:
\begin{enumerate}
\item We prove that, for a wide class of decoder-only Transformer-based language models, should the model be highly confident and correct on a sufficiently long input sequence, this must imply existence of \emph{another} input where the model's prediction is \emph{wrong}, yet the log-perplexity of that prediction approaches \emph{zero}. 
\item We empirically validate this observation by studying \emph{bitstring copy tasks}, both for a custom trained decoder-only Transformer over a small vocabulary, and the Gemma 3 4B large language model \citep{team2025gemma}.
\item Under certain assumptions on homogeneity of confidence, we study \emph{iso-perplexity curves} in the confidence/accuracy space. These curves reveal clear ``unfavourable regions'' where a model gets too confident to justify its own accuracy improvement, and would not be selected against many weaker models.
\item By tracking many checkpoints throughout the training lifecycle of a language model for \emph{parity} prediction, we assess the frequency and circumstances under which models end up on the ``wrong side'' of the iso-perplexity curve in practice. We find that a key driver of undesirable model selection outcomes are \emph{distribution shifts}.
\end{enumerate}

\section{Related work}


While perplexity (or similarly log-likelihood) is the standard metric for evaluating language models, it has long been known in the broader generative model literature that likelihood does not necessarily correlate with sample quality. For instance, \citet{theis2015note} showed that good likelihood scores can be achieved by models that generate poor samples, and conversely, high-quality generators can yield poor likelihoods. \citet{nalisnick2018do} showed that VAEs or flow-based models can assign higher likelihood to images that are \emph{outside} the training distribution and that therefore do not represent the training data.

In the context of language models, \citet{holtzman2019curious} showed a similar disconnect between optimising for likelihood and the generation of high-quality samples. In particular, they show that decoding using Nucleus Sampling leads to better generations (with lower likelihoods) compared to likelihood-maximising approaches such as beam search. 

\paragraph{Failures of Perplexity in Long-Context}
\citet{fang2025what} argue that using perplexity as a metric in long-context is often misleading because  useful signal may vanish when averaging perplexity over thousands of tokens. Their work champions the view that the \emph{aggregation} method is the culprit. Our work rigorously proves results related to this observation, while also extending to claim that there is a detrimental, asymmetric relationship between accuracy and model confidence, which complicates the story further. 

We highlight that this has been alluded to by other work. \citet[Figure 5]{gelberg2025extending} showed that models can maintain low perplexity even when relevant information is strictly unreachable, which seems to explain the effectiveness of the popular context extension method YaRN \citep{peng2023yarn}. Similarly, \citet{liu2024lost} and \citet{hsieh2024ruler} have shown that models often fail to retrieve information `lost in the middle' of a prompt, despite achieving low overall perplexity scores on those same documents. These findings suggest that perplexity is not necessarily aligned with model performance, especially in long-context regimes.

\paragraph{Confidence and Calibration}
A core component of our analysis is the role of model confidence. \citet{guo2017calibration} famously showed that modern neural networks tend to be miscalibrated and overconfident. In the LLM era, while some argue models are generally calibrated \citep{kadavath2022language}, the incentive to minimise perplexity encourages `confident' predictions \emph{in-distribution}. Our analysis studies how these training dynamics allow models to trade accuracy for confidence, creating `unfavourable regions' where a `confident but wrong model' achieves a better perplexity score than a hesitant but more accurate one.

\paragraph{Theoretical Results on Transformers}
Our work relies on recent theoretical works regarding the limitations of the Transformer architecture. \citet{barbero2024transformers} identified the phenomenon of representation collapse in decoder-only Transformers. Extending this, \citet{pasten2025continuity} proved the existence of a `concentration' of infinite sequence collections, such that decoder-only LLMs (under reasonable assumptions on positional embeddings) can model exactly one sequence in each collection. We leverage this continuity result to provide a proof of why perplexity fails: specifically, we show that the existence of a long enough sequence the model predicts accurately and confidently implies the existence of another sequence with very low perplexity that the model still fails to predict over.






\section{Log-perplexity of wrong next-token predictors can arbitrarily approach zero}

For the specific case of autoregressive models trained on next-token prediction (such as large language models), we can recombine a few previous results to theoretically strengthen the empirical finding of \citet{fang2025what}.

\subsection{Preliminaries}

As a clean proxy to the points we are going to make, throughout this section we will focus on a task where both perplexity and correctness are easy to define. Specifically, we study the \textbf{bitstring copy task}: a language model is provided a sequence of bits followed by a unique ``stop'' symbol, $\mathtt{|}$, after which it needs to reproduce the given sequence of bits exactly. For example, given $\mathtt{01010|}$, the model needs to output $\mathtt{01010}$. The model's vocabulary is hence made up of only three symbols: $\mathtt{0}$, $\mathtt{1}$ and $\mathtt{|}$. It is well known that copying is tricky for modern LLMs to learn robustly \citep{barbero2024transformers}, making it an ideal candidate for our study.

Secondly, all of our results rely on the assumption that our language model, $T$, is a decoder-only Transformer with compact position embeddings (CPE). This is necessary in order for the key results of \citet{pasten2025continuity} to apply, and is generally true for the majority of positional embeddings in common use today, such as RoPE \citep{su2024roformer}. We denote the output probability distribution of $T$ as:
\begin{equation}
    T(\mathbf{x})(y) = P_T(y\ |\ \mathbf{x}),
\end{equation}
the probability of emitting symbol $y$ given input prompt $\mathbf{x}$.

\subsection{Deterministic sampling}

In order to make robust claims about a model's accuracy and perplexity, we need to assume it will behave \textbf{deterministically} across all possible input prompts. We hence assume that its outputs are sampled via \emph{greedy decoding}:
\begin{equation}
     T_!(\mathbf{x})= \arg\max_{s\in\{\mathtt{0}, \mathtt{1}\}} T(\mathbf{x})(s)
\end{equation}
assuming all ties are broken consistently, e.g. by always choosing $\mathtt{0}$ in such cases.

In this regime, we will always measure the \textbf{log-perplexity} of the language model, $T$, on the length-$n$ input bitstring $\mathbf{b}\in[\mathtt0, \mathtt1]^n$, defined as follows:
\begin{equation}
    \mathrm{pplx}_T(\mathbf{b}) = -\frac{1}{n}\sum_{k=1}^{n} \log T(b_1\cdots b_n|o_1\cdots o_{k-1})(b_k),
\end{equation}
where the symbols $o_i$ are sampled deterministically:
\begin{equation}
    o_1 = T_!(b_1\cdots b_n|) \qquad o_i = T_!(b_1\cdots b_n|o_1\cdots o_{i-1}).
\end{equation}
This aligns well with the model's loss function, and it is monotonically related to the perplexity. 

Finally, we assume that the model performs all of its computations with appropriate numerical protection, meaning that the obtained values of $\log T(\mathbf{x})(y)$ will never diverge to $-\infty$, and remain bounded by $\log T(\mathbf{x})(y)\ge \log\varepsilon$ for some $\varepsilon > 0$.

\begin{lemma}[Perplexity convergence]
    \label{lemma:pplx-convergence}
    Let $T$ be a decoder-only Transformer with compact position embeddings (CPE), as defined by \citet{pasten2025continuity}. Assume $T$ is trained to perform a copy task over bitstrings, and it samples outputs by greedy decoding.
    
    Let $\boldsymbol\alpha = \alpha_1\alpha_2\cdots\alpha_n\cdots$ be an infinite bitstring. Assume $T$ is capable of correctly copying every finite prefix of $\boldsymbol\alpha$; that is, there is an $\epsilon > 0$ such that, for all $n\in\mathbb{N}$ and $1\leq k\leq n$:\begin{equation}\label{eqn:step}T(\alpha_1\cdots\alpha_{n}|\alpha_1\cdots\alpha_{k-1})(\alpha_k) > 1/2 + \epsilon.
    \end{equation}
    Then, for every $\xi > 0$, there must exist $n'\in\mathbb{N}$ such that, for all prefixes $\boldsymbol\alpha_N = \alpha_1\alpha_2\cdots\alpha_N$ with $N\geq n'$, there is a bitstring $\boldsymbol{\beta}_N$ such that $|\mathrm{pplx}_T(\boldsymbol{\alpha}_N) - \mathrm{pplx}_T(\boldsymbol{\beta}_N)| < \xi$, and $\boldsymbol\beta_N$ is \textbf{not} correctly copied by $T$.
\end{lemma}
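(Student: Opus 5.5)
The plan is to invoke the continuity result of \citet{pasten2025continuity}. It says that for a CPE decoder-only Transformer, and for a fixed suffix, the output distribution $T(\mathbf{x})$ is continuous with respect to a prefix metric in which two long inputs that differ only in a single position become arbitrarily close as the length grows. Concretely, we will use the following consequence. For every $\delta>0$ there is $n_0$ such that for all $N\ge n_0$, if $\mathbf{x},\mathbf{x}'$ have the same length $N+k$ and differ in only one early position, then $|T(\mathbf{x})(y)-T(\mathbf{x}')(y)|<\delta$ for every symbol $y$. The construction is then immediate. Let $\boldsymbol\beta_N$ be $\boldsymbol\alpha_N$ with its first bit flipped, $\beta_1=1-\alpha_1$ and $\beta_i=\alpha_i$ for $i\ge2$. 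Since the first input token is the one ``furthest back'', its influence on late predictions is exactly what the continuity theorem controls.

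\textbf{Step 1 (the same greedy trajectory).} Choose $\delta<\epsilon$. By induction on $k$ we show that the greedy outputs on $\boldsymbol\beta_N$ satisfy $o_k=\alpha_k$ for all $k$. Suppose $o_1\cdots o_{k-1}=\alpha_1\cdots\alpha_{k-1}$. Then the two contexts $\boldsymbol\beta_N|\alpha_1\cdots\alpha_{k-1}$ and $\boldsymbol\alpha_N|\alpha_1\cdots\alpha_{k-1}$ differ in one position. By \eqref{eqn:step} and continuity,
\[
T(\boldsymbol\beta_N|\alpha_{<k})(\alpha_k)>1/2+\epsilon-\delta>1/2,
\]
so greedy decoding emits $\alpha_k$. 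In particular $o_1=\alpha_1\neq\beta_1$, so $\boldsymbol\beta_N$ is \textbf{not} correctly copied.

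\textbf{Step 2 (perplexity closeness).} Both log-perplexities are averages over $k$ along the identical output trajectory. For $k\ge2$ the target symbols coincide, $\beta_k=\alpha_k$. The probabilities are then within $\delta$ of each other and both exceed $1/2$, and $\log$ is $2$-Lipschitz on $[1/2,1]$, so each such term differs by at most $2\delta$. The single term $k=1$ has target $\beta_1\ne\alpha_1$. Its value $-\log T(\cdot)(\beta_1)$ is at most $-\log\varepsilon$ by the numerical-protection assumption, and the $\alpha$-term is at most $\log 2$. This gives
\[
|\mathrm{pplx}_T(\boldsymbol\alpha_N)-\mathrm{pplx}_T(\boldsymbol\beta_N)|\le 2\delta+\frac{\log 2-\log\varepsilon}{N}.
\]
Taking $\delta=\min(\epsilon,\xi)/4$ and $N$ large enough that the second term is below $\xi/2$ yields $n'$.

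\textbf{Main obstacle.} The delicate step is extracting the uniform estimate from \citet{pasten2025continuity}. Their theorem is phrased as continuity on infinite sequences under a specific metric. We need it to hold uniformly over all suffix lengths $k\le N$, because the output suffix also grows with $N$. I would first check that their metric places the position-$1$ token at distance tending to $0$ as the total length grows regardless of the suffix. If instead it only handles fixed suffixes, I would fall back on their concentration statement for the family $\{\boldsymbol\beta_N\}$ directly, or restate the lemma's hypothesis so that it is exactly their uniform Lipschitz-type bound. The remaining steps are routine.
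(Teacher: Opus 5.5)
Your proof is correct and follows the paper's route. You flip a single bit of $\boldsymbol\alpha_N$ and use continuity to show the greedy trajectory on $\boldsymbol\beta_N$ is still $\alpha_1\cdots\alpha_N$, so $\boldsymbol\beta_N$ is mis-copied. You then match the two log-perplexity sums term by term, and the flipped position costs only $O(1/N)$ thanks to the $\log\varepsilon$ floor.

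Where you differ is the quantitative step, and your version is the sound one. The paper invokes continuity with tolerance equal to the margin $\epsilon$ itself. Each of the $N-1$ matched terms can then differ by up to about $2\epsilon$, so their average tends to a constant multiple of $\epsilon$ rather than to $0$. The sign of the $(n_c-1)\epsilon$ term in the paper's displayed bound is off, which is what makes that contribution appear harmless. You instead take the tolerance to be $\min(\epsilon,\xi)/4$ and use that $\log$ is $2$-Lipschitz on $[1/2,1]$. That is what actually gives $<\xi$ for arbitrary $\xi$, as the statement requires.

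Your ``main obstacle'' disappears under the form of the continuity theorem the paper uses. It is stated in relativised Hamming distance for equal-length inputs with the same last symbol. A single flip in a context of length $N+k$ is at distance $1/(N+k)\le 1/(N+1)$, so one threshold works uniformly over all suffix lengths $k$. The last symbols agree ($\mathtt{|}$ or $\alpha_{k-1}$) because the trajectories coincide.

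For the same reason, flipping position $1$ is not special. The metric does not care which position is changed, so any position $j$ works, as in the paper. Your ``furthest back'' intuition is not what the theorem relies on, though it does no harm here.
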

Armed with this result (proved in Appendix \ref{app:lem31}), we can now introduce an assumption of $T$ having a certain (high) confidence in copying $\boldsymbol{\alpha}_N$, which will shortly bring us to one of our key results.
\begin{proposition}[Collapsing confidence]\label{propbeta}
   Let $T$ be a decoder-only Transformer with compact position embeddings (CPE), as defined by \citet{pasten2025continuity}. Assume $T$ is trained to perform a copy task over bitstrings, and it samples outputs by greedy decoding.

    Let $\boldsymbol\alpha = \alpha_1\alpha_2\cdots\alpha_n\cdots$ be an infinite bitstring. Assume $T$ is capable of correctly copying every finite prefix of $\boldsymbol\alpha$ with \textbf{confidence} $(1-\gamma)$; that is, there is a $0 \le \gamma < 1/2$ such that, for all $n\in\mathbb{N}$ and $1\leq k\leq n$:\begin{equation}T(\alpha_1\cdots\alpha_{n}|\alpha_1\cdots\alpha_{k-1})(\alpha_k) \ge 1-\gamma.
    \end{equation}
    Then, for every $\epsilon > 0$, there must exist $n'\in\mathbb{N}$ such that, for every size $N\ge n'$, there is a bitstring $\boldsymbol{\beta}_N=\beta_1\cdots\beta_N$ such that $\mathrm{pplx}_T(\boldsymbol{\beta}_N) < -\log(1-\gamma) + \epsilon$, and $\boldsymbol\beta_N$ is \textbf{not} correctly copied by $T$.
\end{proposition}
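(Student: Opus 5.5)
The plan is to derive Proposition~\ref{propbeta} directly from Lemma~\ref{lemma:pplx-convergence}, after bounding the log-perplexity of the prefixes $\boldsymbol\alpha_N$ themselves. The confidence assumption says every correct token of $\boldsymbol\alpha_N$ has probability at least $1-\gamma$.

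First I check that the lemma's hypotheses hold. Since $0\le\gamma<1/2$, we have $1-\gamma>1/2$. Setting $\epsilon_0 = (1/2-\gamma)/2>0$ gives
\[
T(\alpha_1\cdots\alpha_n|\alpha_1\cdots\alpha_{k-1})(\alpha_k)\ge 1-\gamma > 1/2+\epsilon_0 ,
\]
which is exactly \eqref{eqn:step}. This strict margin also shows that greedy decoding emits $\alpha_k$ at every step: the opposite bit has probability at most $\gamma<1/2$, so the tie-breaking rule never comes into play. Hence, for every $N$, the greedy outputs satisfy $o_i=\alpha_i$ for all $i\le N$. The conditioning contexts in the definition of $\mathrm{pplx}_T(\boldsymbol\alpha_N)$ are therefore precisely the contexts in the confidence hypothesis. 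It follows that
\[
\mathrm{pplx}_T(\boldsymbol\alpha_N) = -\frac1N\sum_{k=1}^N \log T(\alpha_1\cdots\alpha_N|\alpha_1\cdots\alpha_{k-1})(\alpha_k)\le -\log(1-\gamma).
\]
Each summand is at most $-\log(1-\gamma)$, and the average inherits this bound.

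Next, given $\epsilon>0$, I apply Lemma~\ref{lemma:pplx-convergence} with $\xi=\epsilon$. This yields $n'$ such that, for every $N\ge n'$, there is a bitstring $\boldsymbol\beta_N$ that is not correctly copied by $T$ and satisfies $|\mathrm{pplx}_T(\boldsymbol\alpha_N)-\mathrm{pplx}_T(\boldsymbol\beta_N)|<\epsilon$. Combining this with the bound from the first step gives
\[
\mathrm{pplx}_T(\boldsymbol\beta_N)<\mathrm{pplx}_T(\boldsymbol\alpha_N)+\epsilon\le -\log(1-\gamma)+\epsilon,
\]
which is the claim. The lemma produces $\boldsymbol\beta_N$ for every $N\ge n'$, which matches the proposition's quantifiers.

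The main delicate point is the first step. Perplexity is defined along the greedily decoded trajectory $o_1\cdots o_{k-1}$, while the hypothesis is stated along the teacher-forced trajectory $\alpha_1\cdots\alpha_{k-1}$. The argument has to note that these two trajectories coincide, which requires the strict inequality $\gamma<1/2$. Everything else is a direct invocation of the lemma, and all the continuity machinery of \citet{pasten2025continuity} stays inside its proof. One should also confirm that $\boldsymbol\beta_N$ has length $N$ as the statement requires; the lemma's construction, which perturbs $\boldsymbol\alpha_N$ using continuity, should give a same-length string, and if not, this needs to be read off from its proof.
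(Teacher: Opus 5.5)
Your proposal is correct and matches the paper's proof: invoke Lemma~\ref{lemma:pplx-convergence} with $\xi=\epsilon$, then bound $\mathrm{pplx}_T(\boldsymbol\alpha_N)\le-\log(1-\gamma)$ term by term. Two of your details are small but welcome refinements of the paper's one-line argument: your margin $(1/2-\gamma)/2$ is needed because the paper's choice $\frac12-\gamma$ only gives a non-strict inequality against the lemma's strict hypothesis, and you explicitly check that the greedy trajectory coincides with $\boldsymbol\alpha_N$.
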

\begin{proof} This result can be derived by applying Lemma \ref{lemma:pplx-convergence}, setting $(\epsilon=\frac{1}{2}-\gamma,\ \xi=\epsilon)$, and remarking that $\mathrm{pplx}_T(\boldsymbol{\alpha}_N) \le -\frac{1}{N}\sum_{k=1}^N\log(1-\gamma)=-\log(1-\gamma).$
\end{proof}

\begin{corollary}
    If there exists any infinite sequence $\boldsymbol{\alpha}$ copied with certainty ($\gamma=0$) by $T$, then there must exist a family of finite sequences $\boldsymbol{\beta}_N$, such that $\lim\limits_{N\to+\infty}\mathrm{pplx}_T(\boldsymbol\beta_N)=0$, and none of the sequences in $\boldsymbol\beta_N$ are correctly copied by $T$.
\end{corollary}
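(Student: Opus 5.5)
The corollary is the special case $\gamma=0$ of Proposition~\ref{propbeta}, combined with a diagonal choice of tolerances. The hypothesis "copied with certainty" means that for all $n\in\mathbb{N}$ and $1\le k\le n$ we have $T(\alpha_1\cdots\alpha_n|\alpha_1\cdots\alpha_{k-1})(\alpha_k)\ge 1$, i.e.\ $\ge 1-\gamma$ with $\gamma=0<1/2$. So Proposition~\ref{propbeta} applies directly. Since $-\log(1-0)=0$, it gives the following: for every $\epsilon>0$ there is $n'(\epsilon)$ such that for every $N\ge n'(\epsilon)$ some bitstring $\boldsymbol\beta_N$ of length $N$ satisfies $\mathrm{pplx}_T(\boldsymbol\beta_N)<\epsilon$ and is not correctly copied by $T$.

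The step that needs care is assembling a single family $(\boldsymbol\beta_N)_N$ that works for all $\epsilon$ simultaneously. The proposition's witness may depend on $\epsilon$, so I would proceed as follows.
\begin{enumerate}
\item Take $\epsilon_j=1/j$.
\item Define thresholds $m_j=\max\{n'(\epsilon_1),\dots,n'(\epsilon_j)\}$, and force them to be strictly increasing by replacing $m_j$ with $\max(m_j, m_{j-1}+1)$.
\item For $m_j\le N<m_{j+1}$, let $\boldsymbol\beta_N$ be the witness supplied by Proposition~\ref{propbeta} with $\epsilon=\epsilon_j$. This is legitimate because $N\ge m_j\ge n'(\epsilon_j)$.
\end{enumerate}
Then every $\boldsymbol\beta_N$ with $N\ge m_1$ is incorrectly copied, and satisfies $0\le\mathrm{pplx}_T(\boldsymbol\beta_N)<1/j$ whenever $N\ge m_j$. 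Because $m_j\to\infty$, this yields $\lim_{N\to\infty}\mathrm{pplx}_T(\boldsymbol\beta_N)=0$. The lower bound $0\le\mathrm{pplx}_T$ holds because each term $-\log T(\cdot)(\cdot)$ is nonnegative for probabilities at most $1$.

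Finally, the family is indexed from $N\ge m_1$ onward, which is all the limit statement requires. If one insists on indices $N<m_1$ as well, one can either start the family at $m_1$ or choose arbitrary incorrectly copied strings there; the limit is unaffected. I expect no real obstacle here: all the substance lies in Lemma~\ref{lemma:pplx-convergence} and Proposition~\ref{propbeta}.
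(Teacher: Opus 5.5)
Your proposal is correct and matches the paper. The paper gives no separate proof: it treats this corollary as the immediate $\gamma=0$ instance of Proposition~\ref{propbeta}, where $-\log(1-\gamma)=0$. Your diagonal choice $\epsilon_j=1/j$ is sound extra bookkeeping that the paper skips, though it is not strictly needed: in the proof of Lemma~\ref{lemma:pplx-convergence}, $\boldsymbol\beta_N$ is simply $\boldsymbol\alpha_N$ with one bit flipped, so the witness does not depend on the tolerance.
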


This result demonstrates that, as models get more confident on any input, this necessarily allows for confounding situations where some other inputs get incorrectly processed without a visible impact on perplexity.

\subsection{Stochastic sampling}

One important assumption that allowed for this result to be cleanly derived is greedy decoding (i.e. sampling with temperature $\theta=0$). As this setup is less common in contemporary use of decoder-only Transformers, here we briefly remark on the applicability of our theoretical results in the stochastic sampling case. In our context, increasing $\theta$ also increases the likelihood of a ``random bit-flip'' which would lead to incorrect copying of the bitstring $\boldsymbol\alpha_N$.

First, we abstract away the choice of $\theta$ by folding it into $\gamma$:
\begin{remark}
    Let $T(\mathbf{a})(\sigma) = (1-\gamma)$ for an input bitstring $\mathbf{a}$ and bit $\sigma\in\{0, 1\}$. Then, assuming we sample with temperature $\theta > 0$, the sampling probability becomes:
    \begin{equation}
        T_\theta(\mathbf{a})(\sigma) = \frac{(1-\gamma)^{1/\theta}}{(1-\gamma)^{1/\theta} + \gamma^{1/\theta}} = 1-\gamma',
    \end{equation}
    where $\gamma'$ is a function of $\gamma$ and $\theta$. Therefore, varying temperature of a $(1-\gamma)$-confident model may be seen as a model with $\theta=1$ but a different confidence level $(1-\gamma')$. Hence, we may assume $\theta=1$ without loss of generality.
\end{remark}
\begin{figure*}
    \includegraphics[width=\linewidth]{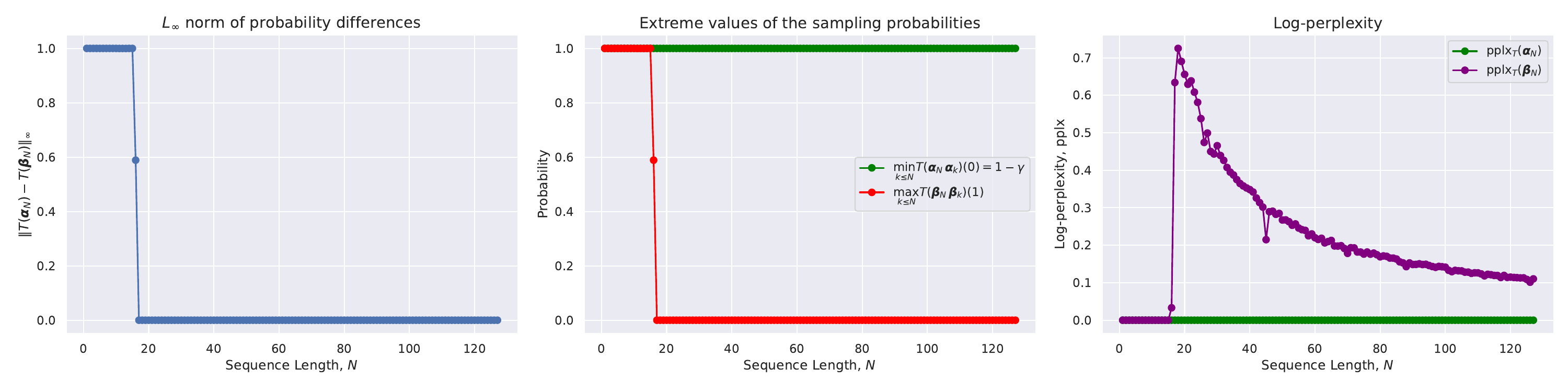}
    \includegraphics[width=\linewidth]{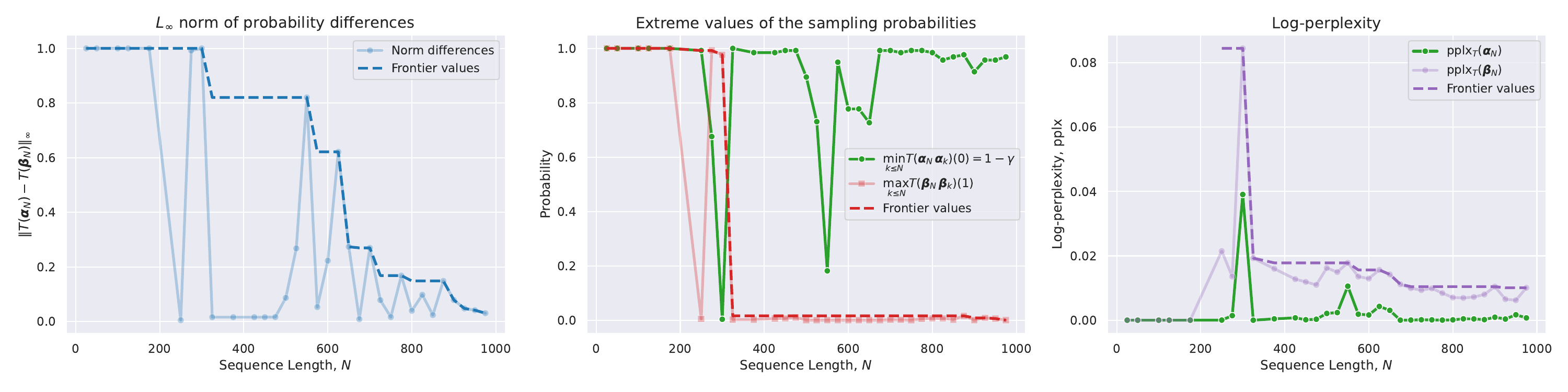}
    \caption{For various sequence lengths, $N$, on the copy task, we compute ({\bf Left}) the $L_\infty$ norm of the difference between the logit distributions across all positions, ({\bf Middle}) the minimal observed probability of predicting $\boldsymbol{\alpha}_k$---our conservative estimate of $1-\gamma$---and the maximal observed probability of predicting $\boldsymbol{\beta}_N$---which can serve as a bound on the probability that the model will copy $\boldsymbol{\beta}_N$ properly. We also plot ({\bf Right}) the log-perplexity for both $\boldsymbol\alpha_N$ and $\boldsymbol{\beta}_N$. This is done both for ({\bf Top}) a toy copy environment where a CPE Transformer is trained on sizes up to 16 bits, and ({\bf Bottom:}) prompting Gemma 3 4B with a copy request.}
    \label{fig:pplxexp}
\end{figure*}
Firstly, we recall a useful result -- Boole's inequality -- which allows us to place a meaningful bound on the probability of bit-flips in a $(1-\gamma$)-confident sampler:
\begin{remark}\label{rem:rem}
    Assume $T$ is capable of correctly copying a length-$N$ bitstring $\boldsymbol\alpha_N$ with confidence $(1-\gamma)$. Then, we can bound the probability of any stochastic copying errors using Boole's inequality (letting $\bar{\alpha}_k=1-\alpha_k$):
    \begin{align*}1 - P_T(\boldsymbol\alpha_N\ |\ \boldsymbol{\alpha}_N|) &\le \sum\limits_{k=1}^NT(\alpha_1\cdots\alpha_N|\alpha_1\cdots\alpha_{k-1})(\bar{\alpha}_k)\\
    &\le \sum_{k=1}^N \gamma = N\gamma.
    \end{align*}
    That is, if $N\gamma\ll 1$, it is unlikely any flips will happen, in which case the baseline sequence $\boldsymbol\alpha_N$ is copied correctly.
\end{remark}
In the stochastic sampling regime, we can leverage the results of \citet{pasten2025continuity} once again, to analyse the probability that $T$ will produce $\boldsymbol\alpha_N$ in response to $\boldsymbol\beta_N$! 
\begin{proposition}
    Assume $T$ is capable of correctly copying every finite prefix of $\boldsymbol\alpha$ with confidence $(1-\gamma)$. Then, for every $\epsilon > 0$ there is an $n'\in\mathbb{N}$ such that, for every size $N\ge n'$, under stochastic output sampling,
    \begin{equation}
        1 - P_T(\boldsymbol{\alpha}_N\ |\ \boldsymbol\beta_N|) \leq N(\gamma + \epsilon),
    \end{equation}
    where $\boldsymbol\beta_N$ is derived as in Proposition \ref{propbeta}.
\end{proposition}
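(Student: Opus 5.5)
The plan is to combine the continuity result of \citet{pasten2025continuity}, as already used in the proof of Lemma \ref{lemma:pplx-convergence}, with the Boole-inequality argument of Remark \ref{rem:rem}. The idea is that $\boldsymbol\beta_N$ is built so that the prompts $\boldsymbol\beta_N|\alpha_1\cdots\alpha_{k-1}$ and $\boldsymbol\alpha_N|\alpha_1\cdots\alpha_{k-1}$ become arbitrarily close in the topology of \citet{pasten2025continuity}, uniformly in $k$, once $N$ is large. Continuity of the CPE Transformer then makes the next-token distributions on these prompts uniformly close. Hence the per-step probability of emitting $\alpha_k$ when conditioned on $\boldsymbol\beta_N$ is at least $1-\gamma-\epsilon$.

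\textbf{Step 1: uniform closeness.} Fix $\epsilon>0$. Recall the construction of $\boldsymbol\beta_N$ from the proofs of Lemma \ref{lemma:pplx-convergence} and Proposition \ref{propbeta}: $\boldsymbol\beta_N$ differs from $\boldsymbol\alpha_N$ only at positions chosen so that the two sequences are close in the metric of \citet{pasten2025continuity}. Concretely, it flips an early bit, or in general a bit whose influence is dampened by the CPE concentration argument. I would invoke their continuity theorem to get an $n'$ with the following property: for all $N\ge n'$ and all $1\le k\le N$,
\[
\bigl|T(\boldsymbol\beta_N|\alpha_1\cdots\alpha_{k-1})(\alpha_k)-T(\boldsymbol\alpha_N|\alpha_1\cdots\alpha_{k-1})(\alpha_k)\bigr|<\epsilon .
\]
In the proof of Lemma \ref{lemma:pplx-convergence} this same uniform bound gives closeness of the log-probabilities, using the $\log\varepsilon$ floor to pass from probabilities to logs. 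Here I only need the probabilities themselves, so this step is a direct reuse.

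\textbf{Step 2: per-step bound.} Combine Step 1 with the confidence hypothesis $T(\boldsymbol\alpha_N|\alpha_1\cdots\alpha_{k-1})(\alpha_k)\ge 1-\gamma$. This gives $T(\boldsymbol\beta_N|\alpha_1\cdots\alpha_{k-1})(\bar\alpha_k)\le\gamma+\epsilon$ for every $k$.

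\textbf{Step 3: union bound.} Under stochastic sampling, the event that the output differs from $\boldsymbol\alpha_N$ is the union, over $k$, of the events that the first deviation occurs at step $k$. The $k$-th of these has probability at most $P(\text{prefix }\alpha_1\cdots\alpha_{k-1})\cdot T(\boldsymbol\beta_N|\alpha_1\cdots\alpha_{k-1})(\bar\alpha_k)\le\gamma+\epsilon$. Summing over $k$ exactly as in Remark \ref{rem:rem} yields $1-P_T(\boldsymbol\alpha_N\,|\,\boldsymbol\beta_N|)\le N(\gamma+\epsilon)$.

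\textbf{Main obstacle.} The delicate point is Step 1. The closeness must hold \emph{uniformly over all $k\le N$}, including late positions where the generated suffix is long. It must also hold for the specific $\boldsymbol\beta_N$ of Proposition \ref{propbeta}, not merely for some nearby sequence. My first attempt would be to observe that the \citet{pasten2025continuity} metric is insensitive to appending a common suffix. Appending $\alpha_1\cdots\alpha_{k-1}$ to both prompts then preserves closeness, which is presumably the very fact already exploited in the appendix proof of Lemma \ref{lemma:pplx-convergence}. If that uniformity is only available for log-probabilities, I would convert back using the Lipschitz property of $\exp$ on $(-\infty,0]$.
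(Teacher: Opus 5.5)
Your proposal is correct and matches the paper's informal proof. Continuity of the CPE Transformer, applied once $N > \lceil 1/\delta\rceil$ for the $\delta$ associated with $\epsilon$, gives $T(\boldsymbol\beta_N|\alpha_1\cdots\alpha_{k-1})(\alpha_k)\ge 1-\gamma-\epsilon$ at every step, and the Boole-inequality argument of Remark \ref{rem:rem} then yields the bound $N(\gamma+\epsilon)$. The uniformity over $k$ that you flag as the main obstacle is immediate: $\boldsymbol\beta_N$ is just $\boldsymbol\alpha_N$ with one bit flipped at an arbitrary position (not necessarily an early one), so the prompts $\boldsymbol\beta_N|\alpha_1\cdots\alpha_{k-1}$ and $\boldsymbol\alpha_N|\alpha_1\cdots\alpha_{k-1}$ share their last symbol and have relativised Hamming distance $1/(N+k)<\delta$ for every $k$.
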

\begin{proof}[Informal proof]
    Follow a similar argument to Remark \ref{rem:rem}, but this time consider $N > \lceil1/\delta\rceil$, where $\delta>0$ is the continuity condition for $\epsilon$, and leverage continuity.
\end{proof}
This result implies that there are three possible outcomes in the stochastic sampling scenario (where $\epsilon_N$ is the smallest value of $\epsilon$ attainable at size $N$):

$N\gamma\ll 1, N\epsilon_N\ll 1$: Corresponds well to our greedy-decoding analysis. The model is confident enough to copy the baseline sequence $\boldsymbol\alpha_N$ with high probability, but it's too tethered to $\boldsymbol\alpha_N$ (due to continuity). Therefore it will fail to copy $\boldsymbol\beta_N$ with high probability (most likely producing $\boldsymbol\alpha_N$).

$N\gamma\ll 1, N\epsilon_N\not\ll 1$: The model copies $\boldsymbol\alpha_N$ with high probability, and the sequence is not long enough for our theory to apply. In this case, we are unable to make concrete claims about the model's behaviour on $\boldsymbol\beta_N$.

$N\gamma\not\ll 1$: The model is not confident enough to reliably copy the baseline sequence $\boldsymbol\alpha_N$, and due to continuity, it will likely fail to copy $\boldsymbol\beta_N$ in the same way.

\subsection{Implications on learning dynamics}
In Lemma \ref{lemma:pplx-convergence}, we showed that the perplexity of an incorrect sequence $\boldsymbol\beta_N$ converges to that of a correct sequence $\boldsymbol\alpha_N$ within a margin $\epsilon$. We now show that this has learnability implications. In particular, as the loss on $\boldsymbol\alpha_N$ goes to $0$, this implies that the loss  on $\boldsymbol\beta_N$ also approaches $0$. Consequently, the training signal for the incorrect sample vanishes, and such a sample cannot be jointly learned (proved in Appendix \ref{app:cor37}).
\begin{corollary}[Vanishing gradients on incorrect samples]
    \label{prop:vanishing-grad}
    Let $\mathcal{L}(\mathbf{x}; T_\theta) = -\frac{1}{M}\sum_{i=1}^M \log T_\theta(\mathbf{x}_{<i})(x_i)$ be the standard autoregressive cross-entropy loss for a CPE decoder-only Transformer with parameters $\theta$. 
    
    Assume that for the sequence $\boldsymbol\alpha_N$, the model achieves a perfect loss, i.e. $\mathcal{L}(\boldsymbol\alpha_N; T_\theta) \to 0$ as $N \to +\infty$. Under the conditions of Proposition \ref{propbeta}, for the sequence $\boldsymbol\beta_N$ which is \textbf{not} correctly copied by $T_\theta$, the gradient of the loss with respect to $\theta$ vanishes:
    \begin{equation}
        \lim_{N \to +\infty} \|\nabla_\theta\mathcal{L}(\boldsymbol\beta_N; T_\theta)\| = 0.
    \end{equation}
\end{corollary}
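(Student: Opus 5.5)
The plan is to show that the loss on $\boldsymbol\beta_N$ goes to $0$ token by token. Then I will use the fact that a cross-entropy term near zero loss has a gradient that vanishes in proportion to the loss.

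I will write $p_i^{\beta} = T_\theta(\boldsymbol\beta_N|o_1\cdots o_{i-1})(\beta_i)$ for the per-token probabilities on $\boldsymbol\beta_N$ and $p_i^{\alpha}$ for the analogous quantities on $\boldsymbol\alpha_N$. As in Proposition \ref{propbeta}, I identify $\mathcal{L}$ with $\mathrm{pplx}_T$ on the copy segment.

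\textbf{Step 1: the loss on $\boldsymbol\beta_N$ vanishes.} By Lemma \ref{lemma:pplx-convergence}, applied with $\xi$ arbitrary, $|\mathcal{L}(\boldsymbol\alpha_N)-\mathcal{L}(\boldsymbol\beta_N)|<\xi$ for all large $N$. Since $\mathcal{L}(\boldsymbol\alpha_N)\to0$ by hypothesis, we get $\mathcal{L}(\boldsymbol\beta_N)\to0$.

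\textbf{Step 2: per-token gradients.} Each summand is $-\log p_i^\beta$, a negative log-softmax. Its gradient is
\begin{equation*}
\nabla_\theta(-\log p_i^\beta)=\textstyle\sum_{y}\bigl(T_\theta(\cdot)(y)-\mathbb{1}[y=\beta_i]\bigr)\nabla_\theta z_y ,
\end{equation*}
where $z_y$ are the logits. The coefficient vector has norm at most $2(1-p_i^\beta)$. I will assume $\|\nabla_\theta z_y\|\le G$ uniformly over inputs and positions. This holds by compactness: CPE gives bounded embeddings, the parameters $\theta$ are fixed, and Transformer layers are smooth. This matches the compactness framework of \citet{pasten2025continuity}. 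Under it, $\|\nabla_\theta(-\log p_i^\beta)\|\le 2|\mathcal{C}|G(1-p_i^\beta)$. Using $1-p\le-\log p$, the triangle inequality yields
\begin{equation*}
\|\nabla_\theta\mathcal{L}(\boldsymbol\beta_N)\|\le \tfrac{2|\mathcal{C}|G}{N}\textstyle\sum_i(1-p_i^\beta)\le 2|\mathcal{C}|G\,\mathcal{L}(\boldsymbol\beta_N)\to0 .
\end{equation*}

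\textbf{Main obstacle.} The hard part is justifying the uniform bound $G$ independent of $N$. Sequence length grows unboundedly, and the gradients pass through attention over $N$ positions. I would first argue that attention outputs are convex combinations of bounded values. Their parameter-derivatives should then stay bounded when the keys and queries live in a compact set, which is what CPE ensures. If this uniformity fails, the fallback is to state the corollary under an explicit assumption of bounded logit Jacobians.

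A secondary subtlety concerns the positions $i$ where the model errs. There $p_i^\beta<1/2$, so each such term contributes an $O(1)$ gradient. These terms are only a vanishing fraction, because their contribution to the loss is at least $(\log2)/N$ each and the loss tends to $0$. The $1/N$ averaging therefore absorbs them, and the bound above already handles this.
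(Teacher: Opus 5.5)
Your proposal is correct and follows essentially the same route as the paper. Both arguments write the gradient as $\frac{1}{N}\sum_i(\mathbf{p}_i-\mathbf{y}_i)^\top J_\theta$, bound the logit Jacobian uniformly via compactness (the paper simply asserts this bound, which amounts to your stated fallback), and use the vanishing loss on $\boldsymbol\beta_N$. Your step $\|\mathbf{p}_i-\mathbf{y}_i\|_1 = 2(1-p_i^\beta)\le -2\log p_i^\beta$, which bounds the averaged residual by $\mathcal{L}(\boldsymbol\beta_N)$ itself, is in fact tighter than the paper's claim that $\|\mathbf{p}_i-\mathbf{y}_i\|\to 0$ for every $i$. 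That claim fails at the flipped position $j$, and the conclusion is saved only by the $1/N$ averaging that you make explicit.
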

\subsection{Empirical analysis}

We attempt to validate our theoretical results in Figure \ref{fig:pplxexp}, both when pre-training a CPE Transformer on solely the copy task, and on a larger, general Gemma 3 4B model, setting $\boldsymbol\alpha_N=00\cdots00$ and $\boldsymbol\beta_N=00\cdots01$. Our observations match our expectations: continuity holds in both regimes, with the gap between the probability distributions on $\boldsymbol\alpha_N$ and $\boldsymbol\beta_N$ diminishing with increasing $N$. Further, the probability of continuing $\boldsymbol{\alpha}_N$ remains high and stable for most input sizes, whereas the probability of successfully continuing $\boldsymbol\beta_N$ collapses. All the while, (log-)perplexity indeed gets iteratively closer between the two sequences.

One important caveat with these results is the observed noisy patterns in the Gemma 3 4B experiments. This is due to the fact that, unlike the clear-cut bitstring vocabulary of our theoretical setup, Gemma 3 has a much larger set of possible tokens---and especially due to their failure to count \citep{barbero2024transformers}, on certain occasions the model attempts to prematurely predict newline characters and end-of-turn characters. These both cause issues with the computed probability distribution and perplexities, but they do not affect the overall trends of the relevant metrics collapsing, which we visualised using dashed lines.

Almost none of the results derived so far are specific to perplexity's pointwise form---they, instead, mainly rely on the \emph{averaging} process of the equation. As such, one might be tempted to see this as further evidence of \citet{fang2025what}'s claim that the perplexity function itself might not be inherently problematic---just the way it's aggregated. Furthermore, the LongPPL replacement for perplexity which is proposed in \citet{fang2025what} would not necessarily suffer from the smoothing effects we identify here, as it would significantly shorten the number of tokens for which the metric is computed. That said, we believe there \emph{are} inherent issues in the perplexity function beyond how it's averaged, and this motivates us to study a \emph{pointwise} setup with only one output, but placing important emphasis on the model confidence values.

\section{An analytic view into confidence}

\begin{figure*}
    \includegraphics[width=0.5\linewidth]{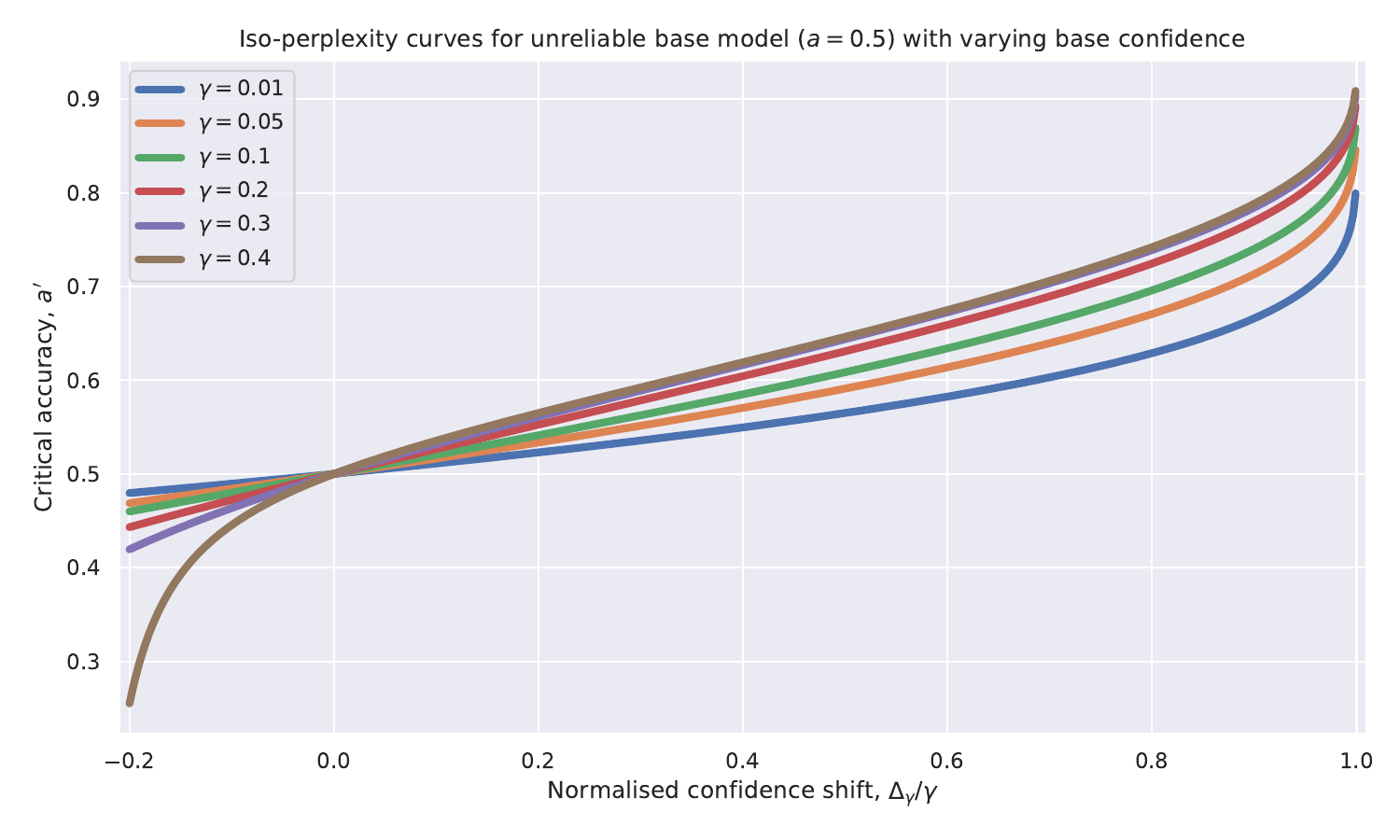} 
    \includegraphics[width=0.5\linewidth]{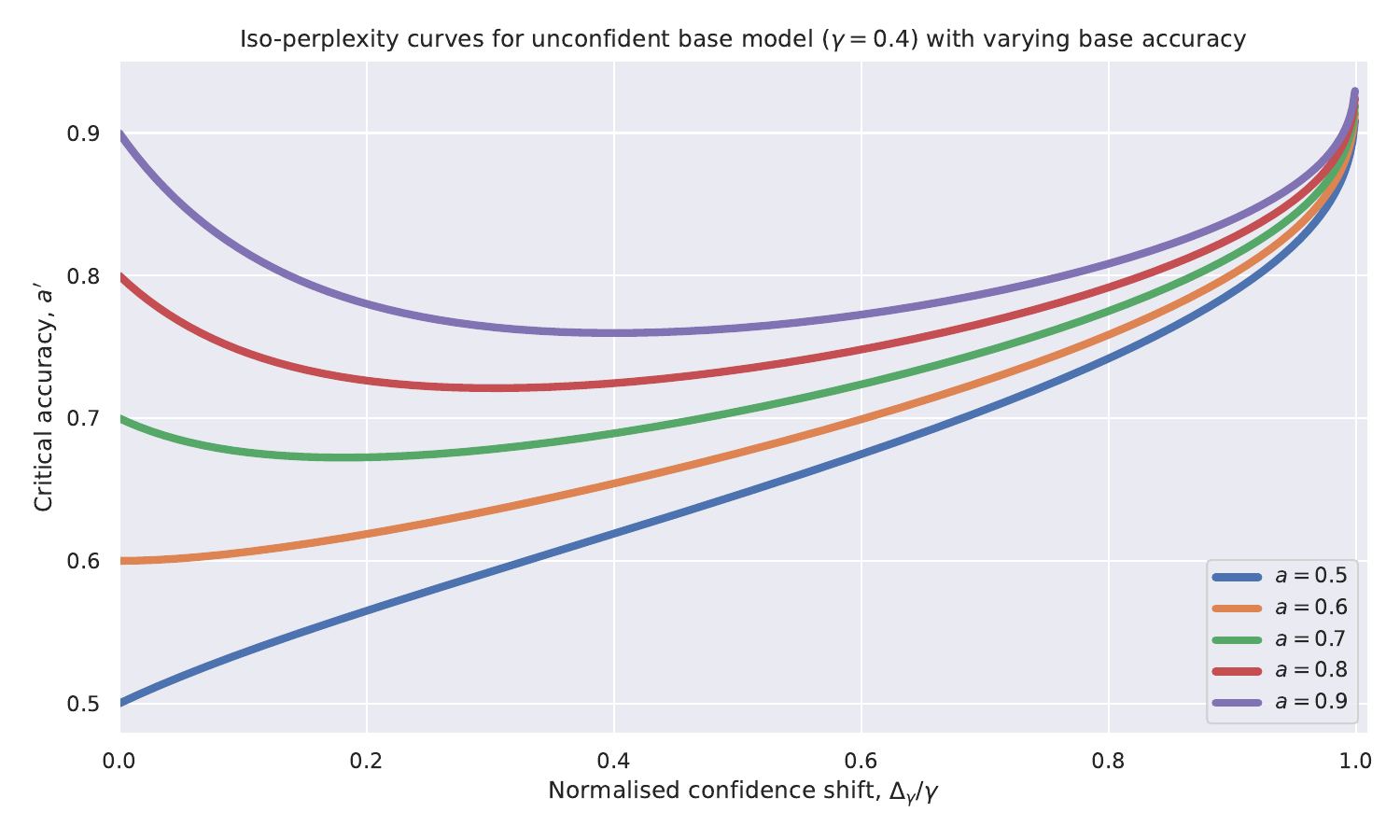}
    \caption{{\bf Left:} Iso-perplexity curves for the setting with an unreliable base model ($a=0.5$) for varying choices of confidence ($1-\gamma$). {\bf Right:} Iso-perplexity curves for an unconfident base model ($\gamma=0.4$) for varying choice of base accuracy $a$,.} \label{fig:isoppl}
\end{figure*}

From the theoretical and empirical analysis we presented so far, one variable that clearly stands out is \emph{confidence}. In systems relying on stochastic sampling, high confidence (low $\gamma$) is important for them to generalise mechanistically---as $N\gamma$ needs to be sufficiently small to attenuate the likelihood of failures over long ranges $N$. However, our theory implies that any high-confidence prediction in CPE Transformers not only opens the door to guaranteed failures elsewhere, it does so in a way that perplexity may not be able to detect the failure. In what follows, we attempt to answer: Can we establish a more general connection between the level of confidence a model has and its predictive power, in a way that reveals how predictable that power is via perplexity?

The answer is affirmative---in that, whenever confidence of a model increases, the model needs to supplement that confidence with a sufficient boost in predictive power---otherwise, perplexity will be unable to recognise this jump in confidence as positive. Specifically, with the right set of initial assumptions, it is possible to \emph{analytically} solve for the ``critical accuracy'' needed to justify increased confidence.

\subsection{Preliminaries}

In order to be able to analytically manipulate the expressions we care about, it is important to make simplifying assumptions that will allow us to abstract away our model's \emph{confidence} ($1-\gamma$) and \emph{accuracy} ($a$) as scalar variables in $[0, 1]$. Further, there must be a simple way to relate those scalar variables to the model's log-perplexity, $\mathrm{pplx}$.

The specific framework we assume which captures this idea well is a binary classification problem, where the model \emph{always} makes its decisions with identical confidence ($1-\gamma$). This means that we can express the log-perplexity over a dataset with accuracy $a$ as:
\begin{equation}\label{eqn:pplxmodel}
    \mathrm{pplx}_{a,\gamma} = - a\log (1-\gamma) - (1-a)\log \gamma
\end{equation}

\subsection{Iso-perplexity curves}

\begin{figure*}
    \centering
    \includegraphics[width=\linewidth]{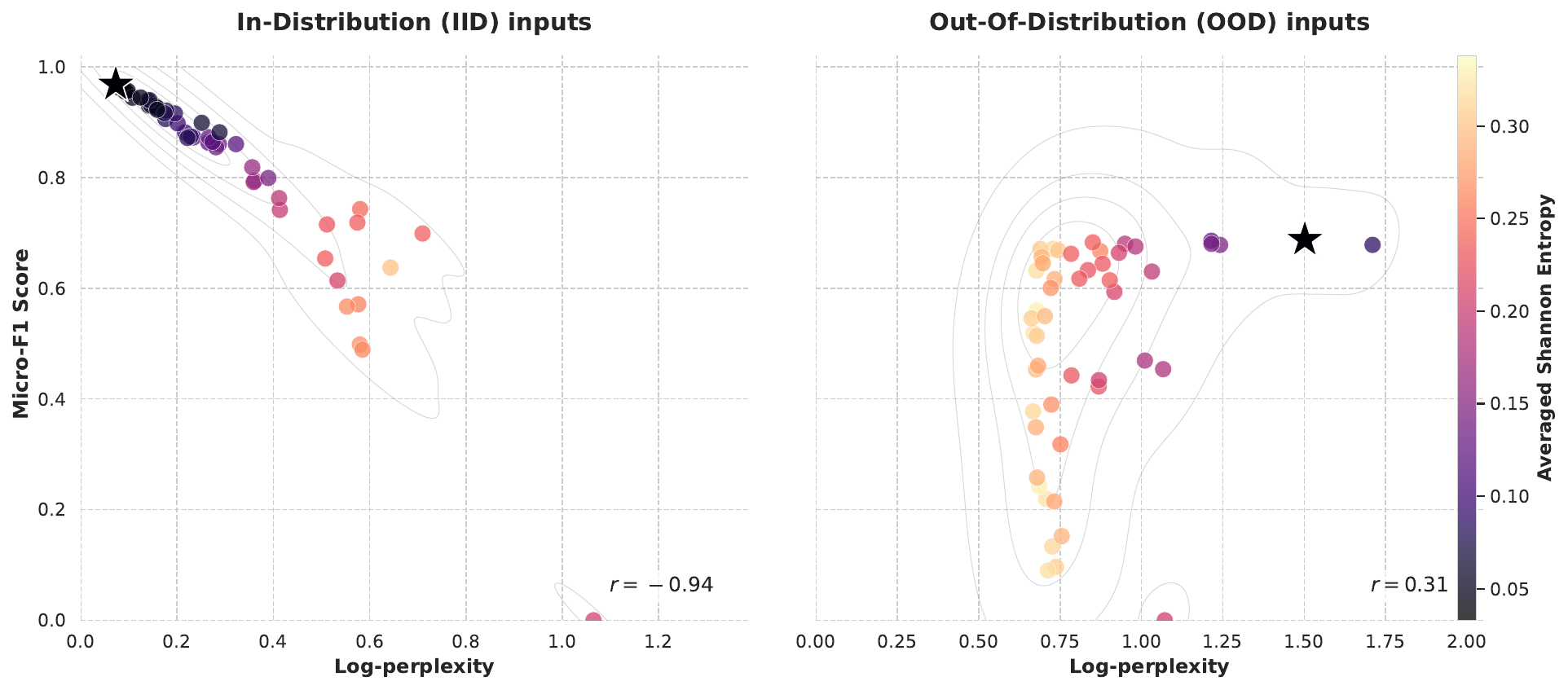}
    \caption{Scatter plots of micro-$\mathrm{F}_1$ scores against log-perplexities, $L$, for various checkpoints of a Transformer model trained on the Parity problem, as specified by \citet{vitvitskyi2025makes}, for both in-distribution ({\bf Left}) and out-of-distribution ({\bf Right}) held-out data. We also colour-code the checkpoints by their averaged Shannon entropy, $\bar{H}$, provide the Pearson correlation coefficient, $r$, and highlight the point with the highest accuracy by using a star (also colour-coded by entropy).}
    \label{fig:parity}
\end{figure*}

Now, we consider a setting where the model gets \emph{more confident} by $\Delta_\gamma\in[0, \gamma]$; that is, its confidence when correct jumps to $1-\gamma + \Delta_\gamma$, and its confidence in the correct answer when wrong drops, symmetrically, to $\gamma-\Delta_\gamma$. If the accuracy doesn't change, the first term of $\mathrm{pplx}$ will decrease while the second will increase. 

But, even though we symmetrically altered the confidence by $\Delta_\gamma$, the change in these two terms is \emph{not} symmetrical, as the $\log$ function has a significantly higher rate of change when its input is close to $0$ compared to being close to $1$. As such, if we keep accuracy the same when increasing confidence, this will in many cases \emph{increase} perplexity. 

Accordingly, a sufficient \emph{rise in accuracy}, $a'$, is needed to compensate for this increase in confidence, if we wish perplexity to recognise this improvement in model accuracy.

This ``critical point'' in accuracy happens when the perplexities of the old and new model become equal:
\begin{align*}-a\log(1-\gamma)&-(1-a)\log\gamma \\= -a'\log(1-\gamma+\Delta_\gamma) & -(1-a')\log(\gamma-\Delta_\gamma)\end{align*}
Rearranging the terms, we obtain:
\begin{align*}
    a(\log\gamma - \log(1-\gamma)) &- \log \gamma \\= a'(\log (\gamma - \Delta_\gamma) - \log(1-\gamma+\Delta_\gamma)) &- \log (\gamma-\Delta_\gamma)
\end{align*}
From here, we can explicitly derive the required accuracy:
\begin{equation}\label{eqn:isoppl}
    a' = \frac{\mathrm{pplx}_{a,\gamma}+\log (\gamma - \Delta_\gamma)}{\log(\gamma-\Delta_\gamma)-\log(1-\gamma+\Delta_\gamma)}
\end{equation}
Note that this function depends on both the initial accuracy $a$ and initial confidence $(1-\gamma)$. Therefore, it gives rise to several types of \emph{iso-perplexity curves}, depending on whether we keep $a$ fixed and vary $\gamma$, or keep $\gamma$ fixed and vary $a$. To illustrate what these curves teach us about the reliability of perplexity as a discriminative metric, we focus on two specific cases here:

\paragraph{Iso-perplexity at $a=0.5$} In this setting, we assume starting from an entirely unreliable model, but varying the starting confidence, $(1-\gamma)$. We plot the \emph{critical accuracy}, $a'$, against the \emph{normalised confidence shift}, $\Delta_\gamma/\gamma$; see Figure \ref{fig:isoppl} (Left). Note that any model falling under the iso-perplexity curve would \emph{not} be selected as improving perplexity, even though its accuracy may be better than the random chance of the first model -- similarly, a model may end up above the iso-perplexity curve even though its accuracy is worse than the baseline. We can make two key observations:

Firstly, for all considered starting confidences, not all better more-confident models will decrease perplexity. The afforded ``breathing room'' for $a'$ tends to be greater (in relative terms) the more confident the base model is -- there is ``less surprise'' when making an already confident model more confident. Still, many confidence shifts require increasing accuracy by over 5--10 percentage points, which is very significant.

Secondly, no matter what the starting confidence, truly extraordinary confidence requires truly extraordinary evidence---as $\Delta_\gamma\to\gamma$, $a'\to1$. Put differently, a perfectly confident model must be perfectly accurate, otherwise it will always be rejected by perplexity.

\paragraph{Iso-perplexity at $\gamma=0.4$} In this setting, we start from an unconfident model, but varying the starting accuracy, $a$---once again plotting critical accuracy against normalised confidence shift. The resulting iso-perplexity curves are in Figure \ref{fig:isoppl} (Right). The key insight that this regime offers is the existence of ``unjustified free lunch'' zones, where the iso-perplexity curves are decreasing for positive $\Delta_\gamma$. If an unconfident model is already sufficiently accurate, it is possible to improve their perplexity just by making them more confident -- even if this leads to significant drops in accuracy ($a' < a$).

It is evident that there exists a rather non-negligible space under the iso-perplexity curve with $a' > a$, as well as below it with $a' < a$; it describes a significant family of models that would not be selected by perplexity, in spite of being better predictors than their baseline. We hypothesise this might have implications in many relevant regimes of AI deployment where accuracy cannot be easily measured, \emph{especially} as the model needs to predict outside of its training distribution, which often requires higher confidence.

\subsection{How often are we on the wrong iso-perplexity side?}

Having exposed that there exist very clear regions of the model confidence/accuracy space where perplexity would not select the more accurate model, what remains to be seen is to what extent will this occur in practice.

One might hypothesise that a model is particularly vulnerable to such failures when the inputs stray \emph{out-of-distribution} (OOD) compared to data the model was exposed to during training. Indeed, as we saw with the copy task example, models might \emph{need} to maintain a low value of $\gamma$ in order to even be able to process their baseline sequences properly. However, higher confidence also implies that when any failures do occur, they will be especially painful to perplexity.

Note that our ``identical-confidence'' model described in Equation \ref{eqn:pplxmodel} is substantially \emph{constrained} in order to make iso-perplexity curves analytically derivable---in reality, there may well not be a value of $\gamma$ that fits an observed perplexity/accuracy pair $(L,a)$ over a real dataset. That is, there often may be no $\gamma\in[0, 1]$ such that $L = -a\log(1-\gamma) -(1-a)\log\gamma$. Furthermore, if any models start to get less confident, iso-perplexity curves approach their singularity point at $\gamma=0.5$, at which point it gets hard to see the relevant phase transitions on the confidence/accuracy plot.

For all of the above reasons, we abandon plotting the iso-perplexities here, and instead directly plot the $(L, a)$ pairs we observed via a scatter plot. Whenever $L_1 < L_2$ but $a_1 < a_2$, the model's perplexity cannot discriminate properly between these two points, and we can estimate how often this happens by observing the Pearson correlation coefficient, $r$. In an ideal setting, where the $L$ metric exactly orders accuracies, we would recover $r \approx -1$.

Beyond measuring the frequency of incorrect model selections, we also want to ascertain that these issues can be directly related to model confidence. While we cannot map arbitrary logit collections $\{\log p_i\}_{i=1}^n$ to a fixed value of $\gamma$, we \emph{can} compute a proxy for the model's overall level of certainty by computing the \emph{averaged Shannon entropy}, $\bar{H}= -\frac{1}{n}\sum_{i=1}^n p_i\log p_i$. We can then use this quantity to colour-code the individual models we're studying on the scatter plot---under the assumption that points that will be particular outliers in the OOD setting are the ones where $\bar{H}$ is lower (when the model is on the whole more confident).

\subsection{Parity task setup} When deciding on which problem to choose to study these effects, it is not only desirable for the task to have a natural OOD regime---it should also be seen as ``mechanistically easy but practically hard''. By this, we mean that there is a very clear, simple procedure that generates the ground-truth outputs, yet it is known that reproducing those outputs is hard for contemporary AI systems. The existence of a clear target procedure means that models need to get confident in order to replicate this procedure; the practical hardness means that their confidence will not always be rewarded.

A very good fit is the \textbf{parity} task: given a bitstring, predict the exclusive-or (XOR) of all of its bits (e.g., for $\mathtt{01010}$, predict $\mathtt{0}$; for $\mathtt{11010}$, predict $\mathtt{1}$). Parity is well-understood to be difficult when length-generalising with Transformers, for known theoretical reasons \citep{hahn2020theoretical}, yet the target formula for computing the output is very simple.

We replicate the Transformer training setup for the Parity task from \citet{vitvitskyi2025makes}, reusing the baseline hyperparameters leveraged there, and training the model for $5,000$ gradient steps. Our aim is to appreciate how the model's performance/confidence profile evolves throughout training, and hence, we save many checkpoints of the model throughout training---one for every $100$ steps of gradient descent taken---and evaluate them on held-out in-distribution (IID) and out-of-distribution (OOD) bitstrings in terms of size. In this case, we train on bitstrings of size up to $16$, and consider an OOD distribution of bitstrings of size $128$. 

Overall, this procedure generates a dataset of $(L, \mathrm{F}_1, \bar{H})$ tuples, where $L$ is (log-)perplexity, $\mathrm{F}_1$ is the micro-$\mathrm{F}_1$ score obtained by the model on those sequences, and $\bar{H}$ is the averaged Shannon entropy estimated using the model's individual parity prediction logits across the entire bitstring.

\subsection{Results and Discussion}

We visualise the corresponding scatter plots of the stored checkpoints, along with other useful  data (colour-coding, Pearson correlation) in Figure \ref{fig:parity}. We find that these visualisations provide strong evidence for our hypothesis, by making the following observations over the two data distributions:

\paragraph{Training progression} While in-distribution the model appears to gradually improve its loss and performance, with a corresponding decrease in entropy as the model gets more confident, the same trajectory cannot be observed in the OOD case. Worse yet, the checkpoint with the optimal OOD accuracy is one of the worst in terms of OOD perplexity.

\paragraph{Pearson correlation} The IID evaluations of the checkpoints paint a picture of a model whose perplexity improvements, for the most part, track micro-$\mathrm{F}_1$ score improvements; indeed, with $r=-0.94$, there is a strong anticorrelation between the two variables. No such trend can be observed OOD, in fact, the empirical value of $r$ is \emph{positive} rather than negative. This neatly translates to the likelihood that we have a pair of incorrectly discriminated points with $L_1 < L_2$ but $a_1 < a_2$: it is very high in the OOD regime.

\paragraph{Entropy connection} Lastly, the entropy colour-coding reveals the final piece of the puzzle and matches our hypothesis very well. In-distribution, entropy reduction is a sign of model maturity: the confidence increase follows a clear jump in predictive power and decrease in loss. Out-of-distribution, however, the checkpoints with low entropy can retain predictive power while drastically harming perplexity. In fact, the aforementioned best-performing observed OOD model has one of the lowest entropies in the entire dataset.

All taken together, we can make a clear conclusion: in the right kind of \emph{out-of-distribution} regime, \emph{many} points end up on the \emph{wrong} side of iso-perplexity, and this effect can be directly tied to an \emph{increase in confidence}.

\section{Conclusions}

In their recent important work, \citet{fang2025what} make a clear stance on the issues behind perplexity on long ranges:

\emph{``\dots there is growing evidence that LLMs’ perplexity does not indicate their performance on long-context benchmarks. There are two possible sources of this mismatch: either the log-likelihood-based metric is flawed, or the averaged tokens are not representative enough. In this
work, we champion the latter explanation\dots''}

We provided theoretical evidence in support of the latter source---with all tokens contributing to an averaged loss, this has the potential to lead to \emph{weird} situations, where a model makes confident mistakes on an input, yet its log-perplexity can get arbitrarily close to zero for that input.

However, we also found that the former source cannot be ignored---the perplexity metric itself is inherently skewed, and prone to favouring less confident predictors, \emph{especially} in the long-context settings mentioned above. We found that high model confidence, coupled with a perplexity objective, can be the very reason for being able to construct the weird situations in the former paragraph. We provided additional evidence for this by studying the ample unfavourable regions with respect to \emph{iso-perplexity curves}.

While we do not offer an alternative to perplexity in regimes where accuracy cannot be measured, we hope that our work serves as a useful foundation for exercising appropriate care when using perplexity, as well as offering a few ``diagnostic approaches'' that can help us estimate in which situations one might need to rethink their model selection protocol.

\section*{Acknowledgements}
We would like to thank Alex Vitvitskyi for help with setting up the Parity experiment, and Xiangming Gu and Shakir Mohamed for reviewing the paper prior to submission.

\section*{Impact Statement}

This paper presents work whose goal is to advance the field of Machine
Learning. There are many potential societal consequences of our work, none
which we feel must be specifically highlighted here.


\bibliography{example_paper}
\bibliographystyle{icml2026}

\newpage
\appendix
\onecolumn
\section{Proof of Lemma 3.1.} \label{app:lem31}

\begin{lemma}[3.1.: Perplexity convergence]
    
    Let $T$ be a decoder-only Transformer with compact position embeddings (CPE), as defined by \citet{pasten2025continuity}. Assume $T$ is trained to perform a copy task over bitstrings, and it samples outputs by greedy decoding.
    
    Let $\boldsymbol\alpha = \alpha_1\alpha_2\cdots\alpha_n\cdots$ be an infinite bitstring. Assume $T$ is capable of correctly copying every finite prefix of $\boldsymbol\alpha$; that is, there is an $\epsilon > 0$ such that, for all $n\in\mathbb{N}$ and $1\leq k\leq n$:\begin{equation}T(\alpha_1\cdots\alpha_{n}|\alpha_1\cdots\alpha_{k-1})(\alpha_k) > 1/2 + \epsilon.
    \end{equation}
    Then, for every $\xi > 0$, there must exist $n'\in\mathbb{N}$ such that, for all prefixes $\boldsymbol\alpha_N = \alpha_1\alpha_2\cdots\alpha_N$ with $N\geq n'$, there is a bitstring $\boldsymbol{\beta}_N$ such that $|\mathrm{pplx}_T(\boldsymbol{\alpha}_N) - \mathrm{pplx}_T(\boldsymbol{\beta}_N)| < \xi$, and $\boldsymbol\beta_N$ is \textbf{not} correctly copied by $T$.
\end{lemma}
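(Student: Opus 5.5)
The idea is to take $\boldsymbol\beta_N$ to differ from $\boldsymbol\alpha_N$ only in its final bit, and to use the continuity theorem of \citet{pasten2025continuity} to transfer every next-token distribution from one input to the other.

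\textbf{Continuity input.} For a CPE decoder-only Transformer, \citet{pasten2025continuity} show that the output distribution is uniformly continuous with respect to a metric on prompts. Under this metric, two equal-length prompts that differ in a single position near the end of a long prefix are at distance $O(1/N)$. In the form we need: for every $\eta>0$ there is $\delta>0$, and hence an $n'$ with $1/n' < \delta$, such that for all $N\ge n'$ and any suffix $\mathbf{s}$ of length at most $N$,
\[
\bigl|T(\mathbf{x}\,|\,\mathbf{s})(y)-T(\mathbf{x}'\,|\,\mathbf{s})(y)\bigr|<\eta
\quad\text{for every symbol } y,
\]
whenever $\mathbf{x},\mathbf{x}'$ are length-$N$ bitstrings differing in one bit. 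Setting up this uniform-in-$k$ estimate correctly, with a single $\delta$ covering all positions $k$ for one fixed $N$, is the main obstacle. The step may need care about how the metric of \citet{pasten2025continuity} weights the position of the flip.

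\textbf{Construction.} Set $\boldsymbol\beta_N=\alpha_1\cdots\alpha_{N-1}\bar\alpha_N$. Pick $\eta<\epsilon$ and induct on $k$ to show that greedy decoding on $\boldsymbol\beta_N$ reproduces $o_k=\alpha_k$.
\begin{itemize}
\item Suppose $o_1\cdots o_{k-1}=\alpha_1\cdots\alpha_{k-1}$.
\item The assumption in Equation~\ref{eqn:step} gives $T(\boldsymbol\alpha_N\,|\,\alpha_{<k})(\alpha_k)>1/2+\epsilon$.
\item Continuity then gives $T(\boldsymbol\beta_N\,|\,\alpha_{<k})(\alpha_k)>1/2+\epsilon-\eta>1/2$, so the greedy choice is $\alpha_k$.
\end{itemize}
Hence the model emits $\boldsymbol\alpha_N$ on input $\boldsymbol\beta_N$. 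This differs from $\boldsymbol\beta_N$ at position $N$, so $\boldsymbol\beta_N$ is not copied correctly. Because the decoded prefixes coincide, every term of $\mathrm{pplx}_T(\boldsymbol\beta_N)$ is conditioned on the same $o_{<k}$ as the corresponding term for $\boldsymbol\alpha_N$.

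\textbf{Perplexity comparison.} For $k<N$ the target bits agree, $\beta_k=\alpha_k$. Both probabilities are bounded below by $\varepsilon$ (numerical protection), and $\log$ is $1/\varepsilon$-Lipschitz on $[\varepsilon,1]$, so each such term differs by at most $\eta/\varepsilon$.

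The last term compares $\log T(\boldsymbol\alpha_N\,|\,\alpha_{<N})(\alpha_N)$ with $\log T(\boldsymbol\beta_N\,|\,\alpha_{<N})(\bar\alpha_N)$. These targets differ, but both log-probabilities lie in $[\log\varepsilon,0]$, so the term contributes at most $|\log\varepsilon|/N$ to the average.

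Altogether,
\[
|\mathrm{pplx}_T(\boldsymbol\alpha_N)-\mathrm{pplx}_T(\boldsymbol\beta_N)|\le \frac{\eta}{\varepsilon}+\frac{|\log\varepsilon|}{N}.
\]
To make this smaller than $\xi$, choose $\eta<\min(\epsilon,\ \xi\varepsilon/2)$. Then enlarge $n'$ so that both $1/n'<\delta(\eta)$ and $|\log\varepsilon|/n'<\xi/2$ hold.
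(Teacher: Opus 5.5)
Your proposal is correct and follows the same route as the paper. You flip a single bit of $\boldsymbol\alpha_N$, then use the continuity theorem of \citet{pasten2025continuity} to show that greedy decoding on $\boldsymbol\beta_N$ reproduces $\boldsymbol\alpha_N$, and finally match the perplexity terms. The obstacle you flagged does not arise: that theorem uses relativised Hamming distance with an identical final token, and every input here ends in $\mathtt{|}$ or $o_{k-1}$, so the flip position is irrelevant and uniformity in $k$ is automatic.

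The one place you differ is worth keeping. You choose the continuity tolerance $\eta<\min(\epsilon,\xi\varepsilon/2)$ and convert probability gaps into log-probability gaps via the Lipschitz bound on $[\varepsilon,1]$. The paper instead reuses $\epsilon$ as the tolerance and bounds each matched log-term gap by $\epsilon$ directly. That leaves a term $\frac{n_c-1}{n_c}\epsilon$ which does not vanish as $n_c$ grows, so your version actually closes the final estimate for arbitrary $\xi$.
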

\begin{proof}
    Since $T$ is a CPE decoder-only Transformer, by \citet{pasten2025continuity}'s continuity theorem, there must exist $\delta > 0$ such that, for any two equal-length inputs $\mathbf{x}$ and $\mathbf{x'}$, if their relativised Hamming distance $d_H(\mathbf{x}, \mathbf{y}) < \delta$ and their last symbol is identical, then $\|T(\mathbf{x})-T(\mathbf{y})\|_\infty\leq \epsilon$. 
    
    Whenever $n_c > \lceil 1/\delta\rceil$, we can find a $\boldsymbol{\beta}_{n_c}$ such that $d_H(\boldsymbol{\alpha}_{n_c}, \boldsymbol{\beta}_{n_c}) < \delta$---simply flip exactly one bit in $\boldsymbol{\alpha}_{n_c}$ at an arbitrary position, $j$. Coupled with Equation \ref{eqn:step}'s assumption, we can deduce \begin{equation}T(\beta_1\cdots\beta_{n_c}|\alpha_1\cdots\alpha_{k-1})(\alpha_k) > 1/2,\end{equation} 
    therefore, \begin{equation}T_!(\beta_1\cdots \beta_{n_c}|) = \alpha_1 \qquad T_!(\beta_1\cdots\beta_{n_c}|\alpha_1\cdots\alpha_{k-1}) = \alpha_k\end{equation} for all $1\le k\le n_c$. That is, $\boldsymbol{\beta}_{n_c}$ is \textbf{not} correctly copied by $T$, and its copying log-perplexity is: \begin{equation}\mathrm{pplx}_T(\boldsymbol{\beta}_{n_c}) = -\frac{1}{n_c}(\log T(\beta_1\cdots \beta_{n_c}|\alpha_1\cdots\alpha_{j-1})(\beta_j) +\sum_{k\ne j} \log T(\beta_1\cdots \beta_{n_c}|\alpha_1\cdots \alpha_{k-1})(\alpha_k)).\end{equation}
    Now, once we observe that we can also, analogously, express \begin{equation}\mathrm{pplx}_T(\boldsymbol{\alpha}_{n_c}) = -\frac{1}{n_c}(\log T(\alpha_1\cdots \alpha_{n_c}|\alpha_1\cdots\alpha_{j-1})(\alpha_j) +\sum_{k\ne j} \log T(\alpha_1\cdots \alpha_{n_c}|\alpha_1\cdots \alpha_{k-1})(\alpha_k)),\end{equation}
    we can match the relevant terms in the summations to obtain $|\mathrm{pplx}_T(\boldsymbol\alpha_{n_c}) - \mathrm{pplx}_T(\boldsymbol\beta_{n_c})|\le -\frac{1}{n_c}(\log (\frac{1}{2}+\epsilon) - \log\varepsilon + (n_c-1)\epsilon)$. By algebraic manipulation of this expression we can conclude that, as long as we choose $n_c > \frac{\epsilon - \log(\frac{1}{2}+\epsilon)+\log\varepsilon}{\xi+\epsilon}$, it will hold that $|\mathrm{pplx}_T(\boldsymbol\alpha_{n_c}) - \mathrm{pplx}_T(\boldsymbol\beta_{n_c})| < \xi$. This implies that we can set \begin{equation}n' = \max\left(\underbrace{\lceil1/\delta\rceil}_\mathrm{continuity}, \underbrace{\frac{\epsilon - \log\left(\frac{1}{2}+\epsilon\right)+\log\varepsilon}{\xi+\epsilon}}_\mathrm{oversmoothing}\right),\end{equation}
    at which point we are guaranteed to obtain both the effects of continuity, misclassifying $\boldsymbol{\beta}_{n'}$, \emph{and} smoothing out the perplexity spike obtained by that misclassification.
\end{proof}

\section{Proof of Corollary 3.7} \label{app:cor37}

\begin{corollary}[3.7.: Vanishing gradients on incorrect samples]
    Let $\mathcal{L}(\mathbf{x}; T_\theta) = -\frac{1}{M}\sum_{i=1}^M \log T_\theta(\mathbf{x}_{<i})(x_i)$ be the standard autoregressive cross-entropy loss for a CPE decoder-only Transformer with parameters $\theta$. 
    
    Assume that for the sequence $\boldsymbol\alpha_N$, the model achieves a perfect loss, i.e. $\mathcal{L}(\boldsymbol\alpha_N; T_\theta) \to 0$ as $N \to +\infty$. Under the conditions of Proposition \ref{propbeta}, for the sequence $\boldsymbol\beta_N$ which is \textbf{not} correctly copied by $T_\theta$, the gradient of the loss with respect to $\theta$ vanishes:
    \begin{equation}
        \lim_{N \to +\infty} \|\nabla_\theta\mathcal{L}(\boldsymbol\beta_N; T_\theta)\| = 0.
    \end{equation}
\end{corollary}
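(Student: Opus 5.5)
I will reduce the gradient statement to a statement about the individual per-token log-probabilities. Write $\mathcal{L}(\boldsymbol\beta_N;T_\theta) = -\frac{1}{N}\sum_{k=1}^N \log p_k(\theta)$, where $p_k(\theta)=T_\theta(\boldsymbol\beta_N|o_1\cdots o_{k-1})(\beta_k)$ and the $o_i$ are the greedy outputs. By the proof of Lemma \ref{lemma:pplx-convergence}, these outputs coincide with $\alpha_1\cdots\alpha_{k-1}$. The gradient is then
\begin{equation*}
\nabla_\theta\mathcal{L}(\boldsymbol\beta_N;T_\theta) = -\frac{1}{N}\sum_{k=1}^N \frac{\nabla_\theta p_k(\theta)}{p_k(\theta)}.
\end{equation*}
The plan is to split this sum into the single "flipped" position $j$, where $\beta_j\neq\alpha_j$, and the remaining $N-1$ positions where $\beta_k=\alpha_k$.

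For the flipped position, I use the numerical-protection assumption $p_j\ge\varepsilon$. I also need a uniform bound $\|\nabla_\theta p\|\le G$ on the gradient of any output probability with respect to parameters. With these, the term is at most $G/(\varepsilon N)$, which tends to $0$.

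For the positions $k\neq j$, I use the softmax structure. If $p=\mathrm{softmax}(z)_y$, then $\nabla_\theta\log p=(e_y-\mathrm{softmax}(z))^\top \nabla_\theta z$, and $\|e_y-\mathrm{softmax}(z)\|_1 = 2(1-p)$. This gives $\|\nabla_\theta \log p_k\|\le 2(1-p_k)\,G'$, where $G'$ bounds the logit Jacobian.

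Next I control $1-p_k$ using the hypothesis $\mathcal{L}(\boldsymbol\alpha_N)\to0$. Since every term of $\mathcal{L}(\boldsymbol\alpha_N)$ is nonnegative and $1-q\le-\log q$, the average of $1-q_k^{\alpha}$ tends to $0$, where $q_k^\alpha$ is the corresponding probability on $\boldsymbol\alpha_N$. By continuity (\citet{pasten2025continuity}), $|p_k-q_k^\alpha|\le\epsilon_N$, with $\epsilon_N\to0$ as $N\to\infty$ because the relativised Hamming distance $1/N$ shrinks. Averaging then gives
\begin{equation*}
\frac{1}{N}\sum_{k\ne j}(1-p_k)\le \frac1N\sum_k(1-q_k^\alpha)+\epsilon_N\to0.
\end{equation*}
This matches the shape of the Lemma \ref{lemma:pplx-convergence} bound: the $(N-1)\epsilon$ term is divided by $N$, which is why $\epsilon$ must be driven to $0$ rather than held fixed.

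The main obstacle is the uniform boundedness of $\nabla_\theta z$, the parameter-Jacobian of the logits, uniformly over sequence length $N$ and position. It is not given explicitly in the paper. I would argue it from the same compactness that underlies CPE continuity: parameters are fixed and finite, the positional embeddings lie in a compact set, and attention outputs are convex combinations of bounded vectors. Inductively over layers, both the activations and their parameter derivatives are then bounded independently of $N$. Combining the three bounds gives $\|\nabla_\theta\mathcal{L}(\boldsymbol\beta_N)\|\le G/(\varepsilon N) + 2G'\cdot o(1)\to0$.
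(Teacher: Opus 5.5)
Your proof is correct and has the same skeleton as the paper's. You write the gradient as (softmax minus one-hot) times the logit Jacobian, bound that Jacobian uniformly by compactness, and show the averaged prefactor vanishes. The difference is in the decisive last step.

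The paper asserts $\|\mathbf{p}_i-\mathbf{y}_i\|\to 0$ for every $i$ directly from ``convergence in loss''. That is false at the flipped position $j$, where by construction $T$ puts mass above $1/2$ on $\alpha_j\neq\beta_j$. The paper also never explains how a hypothesis on $\boldsymbol\alpha_N$ reaches $\boldsymbol\beta_N$. Lemma~\ref{lemma:pplx-convergence} cannot supply this: as you noticed, its perplexity-gap bound contains $\frac{n_c-1}{n_c}\epsilon$, which tends to the fixed tolerance $\epsilon$ rather than to $0$.

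You instead isolate $j$ and dispose of it with the $1/N$ weight. You then transfer the hypothesis through continuity with a tolerance $\epsilon_N\to 0$, combined with $1-q\le-\log q$, which needs only averaged rather than per-position convergence. The paper's version is shorter; yours is the one whose argument is actually complete.

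Three small remarks. First, at position $j$ the floor $\varepsilon$ is unnecessary, since your own softmax bound already gives $\|\nabla_\theta\log p_j\|\le 2(1-p_j)G'\le 2G'$. Second, in the Jacobian sketch, ``convex combinations of bounded vectors'' covers only the value path. The derivative through the attention weights is $\sum_t a_t\bigl(\nabla_\theta s_t-\sum_{t'}a_{t'}\nabla_\theta s_{t'}\bigr)v_t$, a covariance under the attention distribution, and that structure is what keeps it bounded independently of $N$. Third, if $\mathcal{L}$ is read with teacher forcing ($\boldsymbol\beta_N|\beta_1\cdots\beta_{k-1}$ as context) rather than greedy contexts, the argument still works with one extra exceptional position, $k=j+1$, where the last symbols differ. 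All other contexts stay within relativised Hamming distance $2/N$ of their $\boldsymbol\alpha_N$ counterparts with matching last symbols.
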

\begin{proof}
The gradient of the cross-entropy loss with respect to parameters $\theta$ is given by 
\begin{equation}\nabla_\theta \mathcal{L} = \frac{1}{N} \sum_{i=1}^N (\mathbf{p}_i - \mathbf{y}_i)^\top J_\theta(\mathbf{x}{<i}),\end{equation}
where $\mathbf{p}_i = T_\theta(\mathbf{x}_{<i})$ is the predicted probability distribution, $\mathbf{y}_i$ is the one-hot target vector for $\boldsymbol\beta_i$, and $J_\theta(\mathbf{x}_{<i}) = \nabla_\theta T_\theta(\mathbf{x}_{<i})$ is the Jacobian of the model logits with respect to the parameters. We can bound the norm of the gradient using the Cauchy-Schwarz inequality:
\begin{equation}\|\nabla_\theta \mathcal{L}\| \leq \frac{1}{N} \sum_{i=1}^N \|\mathbf{p}_i - \mathbf{y}_i\| \|J_\theta(\mathbf{x}{<i})\|.\end{equation}
As we assume that the Transformer is compact, this implies that the norm of the Jacobian is upper bounded by some $K$ (Lipschitz property), i.e. $\sup_\mathbf{x} \|J_\theta(\mathbf{x})\| \leq K$. Furthermore, since the cross-entropy loss is minimised only when the predictive distribution matches the target, convergence in loss implies convergence in
the predicted targets:
\begin{equation}
\lim_{N \to+ \infty} \|\mathbf{p}_i - \mathbf{y}_i\| = 0 \quad \forall i.
\end{equation}
Substituting the bounds back, we achieve the desired result:
\begin{equation}\lim_{N \to +\infty} \|\nabla_\theta \mathcal{L}\| \leq \lim_{N \to+ \infty}  K \frac{1}{N} \sum_{i=1}^N \|\mathbf{p}_i - \mathbf{y}_i\| = 0.\end{equation}
\end{proof}


\end{document}